
\documentclass[journal,twoside,web]{ieeecolor}
\usepackage{lcsys}

\usepackage{cite}
\usepackage{amsmath,amssymb,amsfonts}
\usepackage{algorithmic}
\usepackage{graphicx}
\usepackage{textcomp}

\usepackage{epsfig}

\usepackage{cite}
\usepackage{xcolor}
\usepackage{xurl}
\usepackage{hyperref}
\hypersetup{
    colorlinks,
    linkcolor={red!50!black},
    citecolor={blue!50!black},
    urlcolor={blue!80!black}
}
\usepackage{bm}
\usepackage{breqn}

\usepackage{amsthm}

\newcommand{\reals}{\mathbb{R}}
\newcommand{\R}{\reals}
\newcommand{\Rnonneg}{\reals_{\geq 0}}

\renewcommand{\S}{\mathbb{S}}

\newcommand{\sign}{\operatorname{sign}}

\newcommand{\pd}{\S_{++}}

\newcommand{\Dcal}{\mathcal{D}}

\newcommand{\Ical}{\mathcal{I}}

\newcommand{\Pcal}{\mathcal{P}}
\newcommand{\Scal}{\mathcal{S}}
\newcommand{\Ucal}{\mathcal{U}}
\newcommand{\Xcal}{\mathcal{X}}
\newcommand{\Ycal}{\mathcal{Y}}

\newcommand{\classK}{class~$\mathcal{K}$ }

\newcommand{\classKL}{class~$\mathcal{KL}$ }

\newcommand{\eqn}[1]{\begin{align} #1 \end{align}}
\newcommand{\eqnN}[1]{\begin{align*} #1 \end{align*}}

\newcommand{\bmat}[1]{\begin{bmatrix}#1\end{bmatrix}}

\newcommand{\norm}[1]{\left\Vert #1 \right \Vert}
\newcommand{\abs}[1]{\left | #1 \right |}

\newcommand{\argmin}[1]{\underset{#1}{\text{argmin}}}

\theoremstyle{plain}
\newtheorem{theorem}{Theorem}

\theoremstyle{definition}
\newtheorem{definition}{Definition}
\newtheorem{assumption}{Assumption}
\newtheorem{problem}{Problem}

\theoremstyle{remark}
\newtheorem{remark}{Remark}

\usepackage{setspace}

\begin{document}
\title{Safe and Robust Observer-Controller Synthesis using Control Barrier Functions}
\author{Devansh R. Agrawal, \IEEEmembership{Graduate Student Member, IEEE}, Dimitra Panagou, \IEEEmembership{Senior Member, IEEE}
\thanks{The authors would like to acknowledge the support of the National Science Foundation (NSF) under grant no. 1942907. Both authors are at the Aerospace Engineering department, of the University of Michigan, Ann Arbor, USA. {\tt \{devansh, dpanagou\}@umich.edu}}
}

\maketitle

\begin{abstract}
This paper addresses the synthesis of safety-critical controllers using estimate feedback. We propose an observer-controller interconnection to ensure that the nonlinear system remains safe despite bounded disturbances on the system dynamics and measurements that correspond to partial state information. The co-design of observers and controllers is critical, since even in undisturbed cases, observers and controllers designed independently may not render the system safe. 

We propose two approaches to synthesize observer-controller interconnections. The first approach utilizes Input-to-State Stable observers, and the second uses Bounded Error observers. Using these stability and boundedness properties of the observation error, we construct novel Control Barrier Functions that impose inequality constraints on the control inputs which, when satisfied, certifies safety. We propose quadratic program-based controllers to satisfy these constraints, and prove Lipschitz continuity of the derived controllers.  Simulations and experiments on a quadrotor demonstrate the efficacy of the proposed methods.\looseness=-1
\end{abstract}

\begin{IEEEkeywords}
Robust control; Constrained control; Observers for nonlinear systems
\end{IEEEkeywords}

\section{Introduction}
\label{sec:introduction}
\IEEEPARstart{F}{or} safety-critical systems, one must not only design controllers that prioritize system safety above all else, but also certify that the system will remain safe when deployed. In recent years, Control Barrier Functions~(CBFs)~\cite{ames2016control} have become a popular method to design safety-critical controllers, since a certifiably safe control input can be computed efficiently for nonlinear systems. Many extensions have been proposed to address specific challenges in using CBFs, including robustness~\cite{jankovic2018robust, alan2021safe}, sampled-data considerations~\cite{breeden2021control} and integration with high-level planners~\cite{agrawal2021constructive}. However, these works assume the controller has access to perfect state information. In most practical systems, the true state of the system is unknown and must be reconstructed using only (often noisy) measurements obtained from sensors. In such systems, it is common to design a full-state feedback controller, and then replace the state by an estimate provided by an observer~\cite[Sec. 8.7]{bernard2022observer}. It is well established that a controller capable of stabilizing a system with perfect state information may fail to do so when using the state estimate~\cite[Ch. 12]{khalil2015nonlinear}. Similarly, the use of imperfect information for feedback control may cause safety violations. 

In this paper, we study the implications on safety that arise due to imperfect and partially available information, and propose a method to design safe observer-controllers. This important challenge has only recently received some attention. Measurement-Robust CBFs~\cite{dean2021guaranteeing} have been proposed to address control synthesis in output-feedback, in the context of vision-based control. The authors assume sensors are noiseless and an imperfect inverse of the measurement map is known, i.e. from a single measurement, a ball containing the true state is known. Using this bound, a second-order cone program-based controller was proposed, although the Lipschitz continuity of this controller is yet to be established~\cite{dean2021guaranteeing}. For many safety-critical systems, the measurement maps are non-invertible, limiting the scope for this method.

In~\cite{clark2019control}, a safety critical controller is proposed for stochastic systems, and a probabilistic safety guarantee is proved. The authors consider linear (non-invertible) measurement maps, additive gaussian disturbances, and specifically use the Extended Kalman Filter~(EKF) as the observer. In~\cite{jahanshahi2020synthesis} this work is extended to consider a broader class of control-affine systems, and probabilistic guarantees of safety over a finite forward interval are obtained. Establishing safety in a deterministic (non-probabilistic) sense or using alternative observers remains challenging. It has also been demonstrated that in some cases, safety guarantees can be obtained by modeling the system as a Partially Observable Markov Decision Process, e.g.~\cite{ahmadi2019safe}, although such methods are computationally expensive for high-dimensional systems and are more suitable for systems with discrete action/state spaces.

The primary contribution of this paper is in synthesizing safe and robust interconnected observer-controllers in such a manner as to establish rigorous guarantees of safety, despite bounded disturbances on the system dynamics and sensor measurements. We propose two approaches to solve this problem, owing to the wide range of nonlinear observers~\cite{bernard2022observer}. The first approach utilizes the class of Input-to-State Stable observers~\cite{shim2015nonlinear}. The second approach employs the more general class of `Bounded Error' observers, in which a set containing the state estimation error is known at all times. This class of observers includes the Deterministic Extended Kalman Filter (DEKF)~\cite[Ch. 11.2]{khalil2015nonlinear}, Lyapunov-based sum-of-squares polynomial observers~\cite{PylorofObserver}, and others discussed later. We show that our safe estimate-feedback controller can be obtained by solving quadratic programs (QP), and prove Lipschitz continuity of these controllers, allowing for low-computational complexity real-time implementation. The efficacy of the methods is demonstrated both in simulations and in experiments on a quadrotor.

\section{Preliminaries and Background}

\subsubsection*{Notation} Let $\R$ be the set of reals, $\Rnonneg$ the set of non-negative reals and $\pd^n$ the set of symmetric positive definite matrices in $\R^{n\times n}$. $\lambda_{min}(P), \lambda_{max}(P)$ denote the smallest and largest eigenvalues of $P \in \pd^n$.  For $x \in \R^n$, $x_i$ is the $i$-th element, $\norm{x}$ is the Euclidean norm. The norm of a signal $w : \Rnonneg \to \R^q$ is $\norm{w(t)}_\infty \triangleq \sup_{t \geq 0} \norm{w(t)}$. $\gamma_f$ denotes the Lipschitz constant of a Lipschitz-continuous function $f : \R^n \to \R^m$. Class~$\mathcal{K}$, extended \classK  and \classKL functions are as defined in \cite{XU201554}. Lie derivatives of a scalar function $h : \Xcal \to \R$, ($\Xcal \subset \R^n$), along a vector field $f : \Xcal \to \R^n$ are denoted $L_fh(x) = \frac{\partial h}{\partial x}(x) f(x)$. If vector fields has an additional dependency, e.g., $f: \Xcal \times \R^p \to \R^n$, the notation $L_fh(x, y) = \frac{\partial h}{\partial x}(x) f(x, y)$ is used.

\subsubsection{System}

Consider a nonlinear control-affine system:
\begin{subequations}
\label{eqn:sys}
\eqn{
\dot x &= f(x) + g(x) u + g_d(x) d(t), \label{eqn:main_dyn}\\
y &= c(x) + c_d(x) v(t), \label{eqn:measurements}
}
\end{subequations}
where $x \in \Xcal \subset \R^n$ is the system state, $u \in \Ucal \subset \R^m$ is the control input, $y \in \R^{n_y}$ is the measured output, $d: \Rnonneg \to \R^{n_d}$ is a disturbance on the system dynamics, and $v: \Rnonneg \to \R^{n_v}$ is the measurement disturbance. We assume $d$ and $v$ are piecewise continuous, bounded disturbances, $\sup_{t}\norm{d(t)}_\infty = \bar d$ and $\norm{v(t)}_\infty \leq \bar v$ for some known $\bar d, \bar v < \infty$. The functions $f : \Xcal \to \R^n$, $g: \Xcal \to \R^{n \times m}$, $c : \Xcal \to \R^{n_y}$, $g_d : \Xcal \to \R^{n \times n_d}$, and $c_d : \Xcal \to \R^{{n_y} \times n_v}$ are all assumed to be locally Lipschitz continuous.  Notice that $g_d(x) d(t)$ accounts for either matched or unmatched disturbances.

In observer-controller interconnections, the observer maintains a state estimate $\hat x \in \Xcal$, from which the controller determines the control input. The observer-controller interconnection is defined to be of the form:
\begin{subequations}
\label{eqn:observer-controller}
\eqn{
\dot {\hat x} &= p(\hat x, y) + q(\hat x, y) u, \label{eqn:observer}\\
u &= \pi(t, \hat x, y), \label{eqn:controller}
}
\end{subequations}
where $p : \Xcal \times \R^{n_y} \to  \R^n$, $q: \Xcal \times \R^{n_y} \to \R^{n \times m}$ are locally Lipschitz in both arguments. The feedback controller $\pi : \Rnonneg \times \Xcal \times \R^p \to \Ucal$ is assumed piecewise-continuous in $t$ and Lipschitz continuous in the other two arguments. Then, the closed-loop system formed by~(\ref{eqn:sys}, \ref{eqn:observer-controller}) is
\begin{subequations}
\label{eqn:closed_loop_sys}
\eqn{
\dot x &= f(x) + g(x)u + g_d(x) d(t),\\
\dot {\hat x} &= p(\hat x, y) + q(\hat x, y) u,\\
x(0) &= x_0, \ \hat x(0) = \hat x_0,
}
\end{subequations}
where $y$ and $u$ are defined in \eqref{eqn:measurements} and \eqref{eqn:controller} respectively. Under the stated assumptions, there exists an interval $\Ical = \Ical(x_0, \hat x_0) = [0, t_{max}(x_0, \hat x_0))$ over which solutions to the closed-loop system exist and are unique~\cite[Thm 3.1]{KhalilNonlinearSys}.


\subsubsection{Safety}

Safety is defined as the true state of the system remaining within a \emph{safe set}, $\Scal \subset \Xcal$, for all times $t \in \Ical$. The safe set $\Scal$ is defined as the super-level set of a continuously-differentiable function $h: \Xcal \to \R$:
\eqn{
\label{eqn:scal}
\Scal = \{x \in \Xcal : h(x) \geq 0\}.
}

A state-feedback controller\footnote{In \emph{state-feedback} the control input is determined from the true state, $u = \pi(t, x)$. In \emph{estimate-feedback} the input is determined from the state estimate and measurements, $u = \pi(t, \hat x, y)$.} $\pi : \Rnonneg \times \Xcal \to \Ucal$ \emph{renders system~\eqref{eqn:sys} safe} with respect to the set $\Scal$, if for the closed-loop dynamics $\dot x = f(x) + g(x) \pi(t, x) + g_d(x) d(t)$, the set $\Scal$ is \emph{forward invariant}, i.e., $x(0) \in \Scal \implies x(t) \in \Scal \ \forall t \in \Ical$. In output-feedback we define safety as follows:
\begin{definition}
An observer-controller pair~\eqref{eqn:observer-controller} \emph{renders system~\eqref{eqn:sys} safe} with respect to a set $\Scal \subset \Xcal$ from the initial-condition sets $\Xcal_0, \hat \Xcal_0 \subset \Scal$ if for the closed-loop system~\eqref{eqn:closed_loop_sys},
\eqn{
x(0) \in \Xcal_0 \text{ and } \hat x(0) \in \hat \Xcal_0 \implies x(t) \in \Scal \quad \forall t \in \Ical.
}
\end{definition}
Note the importance of the observer-controller connection, i.e., using only $\hat x(t)$, we must obtain guarantees on $x(t)$. 

\subsubsection{Control Barrier Functions}

Control Barrier Functions (CBFs) have emerged as a tool to characterize and find controllers that can render a system safe~\cite{ames2016control}. Robust-CBFs~\cite{jankovic2018robust} also account for the disturbances $d(t)$ in~\eqref{eqn:main_dyn}. We introduce a modification to reduce conservatism, inspired by~\cite{alan2021safe}.
\begin{definition}
A continuously differentiable function $h : \Xcal \to \R$ is a \emph{Tunable Robust CBF} (TRCBF) for  system~\eqref{eqn:sys} if there exists a \classK function $\alpha$, and a continuous, non-increasing function $\kappa: \Rnonneg \to \R$ with $\kappa(0) = 1$, s.t.
\eqn{
\sup_{u \in \Ucal} & \ L_fh(x) +  L_gh(x) u + \alpha(h(x)) \notag \\ 
&  \ \geq \kappa(h(x)) \norm{L_{g_d}h(x)} \bar d,  \ \forall x \in \Scal.
} 
\end{definition}
Examples include $\kappa(r) = 1$ and $\kappa(r) = 2/(1 + \exp(r))$.
Given a TRCBF $h$ for~\eqref{eqn:sys}, the set of safe control inputs is
\eqn{
\label{eqn:Ktrcbf}
K_{trcbf}(x) = \{ &u \in \Ucal :  \ L_fh(x) + L_gh(x) u -\notag\\ &\quad \kappa(h(x)) \norm{L_{g_d}h(x)} \bar d \geq -\alpha(h(x))\},
}
and a safe state-feedback controller is obtained by solving a QP, as in~\cite[Eq. 30]{jankovic2018robust}. The main question is:
\begin{problem}
Given a system~\eqref{eqn:sys} with disturbances of known bounds $\norm{d(t)}_\infty \leq \bar d$, $\norm{v(t)}_\infty \leq \bar v$, and a safe set $\Scal$ defined by~\eqref{eqn:scal}, synthesize an interconnected observer-controller~\eqref{eqn:observer-controller} and the initial condition sets $\Xcal_0, \hat \Xcal_0$ to render the system safe.
\end{problem}

We study systems subject to disturbances with a known bound. We will use this bound to derive sufficient conditions on the control policy to guarantee safety satisfaction. In practice, a conservative upper bounds can be used, although future work will address the probabilistic safety guarantees that are possible under probabilistic disturbances.

\section{Main Results}
\subsection{Approach 1}

Approach 1 relies on defining a set of state estimates, $\hat \Scal \subset \Xcal$, such that if the estimate $\hat x$ lies in $\hat \Scal$, the true state $x$ lies in the safe set $\Scal$. The controller is designed to ensure $\hat x \in \hat \Scal$ at all times. We consider Input-to-State Stable observers:
\begin{definition} [Adapted from \cite{shim2015nonlinear}]
An observer~\eqref{eqn:observer-controller} is an \emph{Input-to-State Stable (ISS) Observer} for system~\eqref{eqn:sys}, if there exists a \classKL function $\beta$ continuously differentiable wrt to the second argument, and a \classK function $\eta$ such that 
\eqn{
\label{eqn:iss_bound}
\norm{x(t) - \hat x(t)} \leq \beta(\norm{x(0)-\hat x(0)}, t) + \eta(\bar w), \forall t \in \Ical,}
where $\bar w = \max ( \bar d, \bar v)$. 
\end{definition}
Various methods to design ISS observers for nonlinear systems have been developed, and reader is referred to \cite{shim2015nonlinear, howell2002nonlinear, alessandri2004observer, arcak2001nonlinear, bernard2022observer} and references within for specific techniques. 

The key property of an ISS observer is that the estimation error is bounded with a known bound: for any $\delta > 0$, there exists a continuously differentiable, non-increasing function $M_\delta: \Rnonneg \to \Rnonneg$, such that 
\eqn{
\label{eqn:iss_bound_M}
\norm{x(0) - \hat x(0)} \leq \delta \Rightarrow \norm{x(t) - \hat x(t)} \leq M_\delta(t) \ \forall t \in \Ical.
}
Comparing \eqref{eqn:iss_bound} and \eqref{eqn:iss_bound_M}, $M_\delta(t) = \beta(\delta, t) + \eta(\bar w)$. Define
\eqn{
\hat{\Scal} = \{ \hat x \in \Xcal : h(\hat x) - \gamma_h M_\delta(t) \geq 0 \},
}
the set of safe state-estimates,  and we obtain the property  $\hat x(t) \in \hat \Scal \implies x(t) \in \Scal$ by the Lipschitz continuity of $h$.\footnote{By Lipschitz continuity, $\abs{h(x) - h(\hat x)} \leq \gamma_h \norm{x-\hat x} \implies h(\hat x) - \gamma_h \norm{x - \hat x}  \leq h(x)$. Therefore, if $\hat x \in \hat \Scal$, then $0 \leq h(\hat x) - \gamma_h M_\delta(t) \leq h(\hat x) - \gamma_h \norm{x - \hat x} \leq h(x)$, i.e., $x \in \Scal$. Thus, $\hat x \in \hat \Scal \implies x \in \Scal$.} Then the conditions to guarantee safety are as follows:
\begin{definition}
A continuously differentiable function $h: \Xcal \to \R$ is an \emph{Observer-Robust CBF} for system~\eqref{eqn:sys} with an ISS observer~\eqref{eqn:observer} of known estimation error bound~\eqref{eqn:iss_bound_M}, if there exists an extended \classK function $\alpha$ s.t.\footnote{Recall the notation $L_ph(\hat x, y) = \frac{\partial h}{\partial x} (\hat x) p(\hat x, y)$.}
\eqn{
\sup_{u \in \Ucal} & \ L_ph(\hat x, y) + L_qh(\hat x, y) u \geq - \alpha(h(\hat x) - \gamma_h M_\delta(0))
}
for all $\hat x \in \Scal$, and all $y \in \Ycal(\hat x) = \{ y : y = c(x) + c_d(x) v(t)  \mid \norm{x - \hat x} \leq M_\delta(0), \norm{v} \leq \bar v\}$, an overapproximation of the set of possible outputs.\footnote{$\Ycal$ is defined using $M_\delta(0)$ instead of $\delta$ since $\Ycal(\hat x(t))$ must contain the set of possible outputs at time $t$ for all $t \in \Ical$.}
\end{definition}

\begin{theorem}
\label{thm:margins}
For system~\eqref{eqn:sys}, suppose the observer~\eqref{eqn:observer} is ISS with estimation error bound~\eqref{eqn:iss_bound_M}. Suppose $\Scal$ is defined by an Observer-Robust CBF $h: \Xcal \to \R$ associated with extended \classK function $\alpha$. If the initial conditions satisfy
\eqn{
\hat x(0) &\in \hat \Xcal_0 = \{ \hat x  \in \Scal : h(\hat x) \geq \gamma_h M_\delta(0) \} \label{eqn:initial_xhat0},\\
x(0) & \in \Xcal_0 = \{ x \in \Scal : \norm{x(0) - \hat x(0)} \leq \delta\},
}
then any Lipschitz continuous estimate-feedback controller $u = \pi(t, \hat x, y) \in K_{orcbf}(t, \hat x, y)$ where
\eqn{
\label{eqn:K_orcbf}
K_{orcbf}(t, \hat x, y) &=  \{ u \in \Ucal : L_ph(\hat x, y) + L_qh(\hat x, y) u \geq \notag \\ 
& \quad  - \alpha\left(h(\hat x) - \gamma_h M_\delta(t)\right) + \gamma_h \dot M_\delta(t)\}
}
renders the system safe from the initial-condition sets $\Xcal_0, \hat \Xcal_0$. 
\end{theorem}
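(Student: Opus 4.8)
The plan is to show that the estimate $\hat x(t)$ remains in the time-varying safe set $\hat\Scal(t) = \{\hat x : h(\hat x) - \gamma_h M_\delta(t) \geq 0\}$ for all $t \in \Ical$, and then invoke the already-established Lipschitz-continuity implication $\hat x(t) \in \hat\Scal(t) \implies x(t) \in \Scal$ to conclude safety of the true state. The first step is to reduce the question to a scalar comparison argument: define $\psi(t) \triangleq h(\hat x(t)) - \gamma_h M_\delta(t)$. The initial-condition set $\hat\Xcal_0$ in \eqref{eqn:initial_xhat0} is chosen precisely so that $\psi(0) \geq 0$. Along closed-loop solutions, $\dot\psi(t) = L_ph(\hat x, y) + L_qh(\hat x, y) u - \gamma_h \dot M_\delta(t)$, where $u = \pi(t,\hat x, y)$. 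Since the controller is required to satisfy $u \in K_{orcbf}(t,\hat x, y)$, the defining inequality of \eqref{eqn:K_orcbf} gives directly $\dot\psi(t) \geq -\alpha(h(\hat x(t)) - \gamma_h M_\delta(t)) = -\alpha(\psi(t))$ whenever $\hat x(t) \in \Scal$ (so that $L_p h, L_q h$ are evaluated where the ORCBF condition applies) and $y$ lies in the admissible output set $\Ycal(\hat x(t))$.

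Next I would handle the two side conditions needed to apply \eqref{eqn:K_orcbf}. First, the output $y(t)$: by the ISS bound and the initial-condition constraint $\norm{x(0)-\hat x(0)} \leq \delta$, we have $\norm{x(t) - \hat x(t)} \leq M_\delta(t) \leq M_\delta(0)$ (using that $M_\delta$ is non-increasing), and since $\norm{v(t)} \leq \bar v$, the actual measurement $y(t) = c(x(t)) + c_d(x(t)) v(t)$ indeed lies in $\Ycal(\hat x(t))$; hence the controller's guarantee is in force along the trajectory. Second, the requirement $\hat x(t) \in \Scal$: this is where a continuity/maximality argument is needed. As long as $\psi(t) \geq 0$ we have $h(\hat x(t)) \geq \gamma_h M_\delta(t) \geq 0$, so $\hat x(t) \in \Scal$, and the comparison inequality $\dot\psi \geq -\alpha(\psi)$ holds. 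Invoking the comparison lemma against the scalar system $\dot r = -\alpha(r)$, $r(0) = \psi(0) \geq 0$, whose solutions satisfy $r(t) \geq 0$ for all $t$ (since $r=0$ is an equilibrium and $\alpha$ is extended class $\mathcal K$), we conclude $\psi(t) \geq r(t) \geq 0$ on the maximal interval where $\hat x(t) \in \Scal$; a standard bootstrap shows this interval is all of $\Ical$.

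Finally, combining $\psi(t) \geq 0$ for all $t \in \Ical$ with the footnote's Lipschitz argument yields $x(t) \in \Scal$ for all $t \in \Ical$, which is exactly the claim of Definition~1 for the pair $(\Xcal_0, \hat\Xcal_0)$. The main obstacle I anticipate is the careful handling of the side condition $\hat x(t) \in \Scal$ in the maximality/bootstrap step: the ORCBF inequality is only assumed for $\hat x \in \Scal$, so one must argue that $\hat x$ cannot leave $\Scal$ before $\psi$ could become negative, and that these two events coincide — this requires invoking continuity of the closed-loop solution on $\Ical$ and applying the comparison lemma on the relatively-open set $\{t : \hat x(t) \in \Int\Scal\}$ (or equivalently arguing by contradiction at the first exit time). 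A secondary technical point is ensuring that the extended class $\mathcal K$ function $\alpha$ makes $\dot r = -\alpha(r)$ forward-complete with nonnegative solutions from nonnegative initial data, which is immediate since $\alpha(0) = 0$ and $\alpha$ is increasing, so $r \equiv 0$ is a solution and uniqueness (or a comparison argument for $\alpha$ merely continuous) keeps $r(t) \geq 0$.
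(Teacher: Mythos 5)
Your proposal is correct and follows essentially the same route as the paper's proof: define $H(t,\hat x)=h(\hat x)-\gamma_h M_\delta(t)$, show the $K_{orcbf}$ constraint yields $\dot H\geq-\alpha(H)$, conclude $H\geq 0$ by comparison, and transfer to $h(x)\geq 0$ via Lipschitz continuity. You are in fact more careful than the paper on the side conditions (verifying $y(t)\in\Ycal(\hat x(t))$ and bootstrapping $\hat x(t)\in\Scal$), which the published proof leaves implicit.
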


\begin{proof}
Consider the function $H(t, \hat x) = h(\hat x) - \gamma_h M_\delta(t)$. By the Lipschitz continuity of $h$, and~\eqref{eqn:iss_bound_M}, $H(t, \hat x) \geq 0 \implies h(x) \geq 0$.  The total derivative of $H$ is 
\eqnN{
\dot H = \frac{\partial H}{\partial t} + \frac{\partial H}{\partial \hat x} \dot {\hat x} = -\gamma_h \dot M_\delta + L_ph(\hat x, y) + L_qh(\hat x, y) u
}
therefore, for any $\pi(t, \hat x, y) \in K_{orcbf}(t, \hat x, y)$ we have $\dot H \geq -\alpha(H)$. Since $H(0, \hat x_0) \geq 0$ (from the initial condition~\eqref{eqn:initial_xhat0}), $H(t, \hat x) \geq 0, \forall t \in \Ical$, completing the proof. 
\end{proof}

\begin{remark}
Under the same assumptions as Theorem~\ref{thm:margins}, if $\Ucal = \R^m$ and a desired control input $\pi_{des}: \Rnonneg \times \Xcal \to \R^m$ is provided, a QP-based safe estimate-feedback controller is
\eqn{
\label{eqn:margins_QP}
&\pi(t, \hat x, y) =\argmin{u \in \R^m} \norm{u - \pi_{des}(t, \hat x)}^2, \ \text{s.t.}\\
&\ L_ph(\hat x, y) + L_qh(\hat x, y) u \geq -\alpha(h(\hat x) - \gamma_h M_\delta(t)) + \gamma_h \dot M_\delta(t) \notag
}
\end{remark}

\begin{remark}
The constraint in \eqref{eqn:margins_QP} does not explicitly depend on the disturbances $d(t)$ and $v(t)$, since the effect of these disturbances is captured by the estimation error bound $M_\delta(t)$. Furthermore, since $\gamma_h \dot M_\delta(t) \leq 0$,\footnote{Since $M_{\delta}(t) = \beta(\delta, t) + \eta(\bar w)$, and $\beta$ is a \classKL function, $\dot M_{\delta}(t) = \partial \beta / \partial t < 0$. Finally since $\gamma_h \in \Rnonneg$ is a Lipschitz constant, $\gamma_h \dot M_\delta(t) \leq 0$.} the constraint~\eqref{eqn:margins_QP} is easier to satisfy for higher convergence rates of the observer. 
\end{remark}

\begin{remark} For a linear \classK function, $\alpha(r) = \gamma_\alpha r$, if $\dot M_\delta \leq - \gamma_\alpha M_\delta(t)$, a sufficient condition for~\eqref{eqn:margins_QP} is
\eqnN{
L_ph(\hat x, y) + L_qh(\hat x, y) u \geq - \gamma_\alpha h(\hat x).
}
which does not depend on the bound $M_\delta(t)$ or Lipschitz constant $\gamma_h$.  In other words, if the observer converges faster than the rate at which the boundary of the safe set is approached, i.e., if $\dot M_\delta \leq -\gamma_\alpha M_\delta$, then a safe control input can be obtained without explicit knowledge of $M_\delta$ or $\gamma_h$. This matches the general principle that for good performance observers should be converge faster than controllers.
\end{remark}

\subsection{Approach 2}

While in Approach 1 we used the stability guarantees of ISS observers to obtain safe controllers, in Approach 2 we consider observers that only guarantee boundedness of the estimation error. First, we define Bounded-Error Observers: 
\begin{definition}
An observer~\eqref{eqn:observer} is a \emph{Bounded-Error (BE) Observer}, if there exists a bounded set $\Dcal(\hat x_0) \subset \Xcal$ and a (potentially) time-varying bounded set $\Pcal(t, \hat x) \subset \Xcal$ s.t. 
\eqn{
\label{eqn:DimpliesP}
x_0 \in \Dcal(\hat x_0) \implies x(t) \in  \Pcal(t, \hat x) \ \forall t \in \Ical.
}
\end{definition}
Figure~\ref{fig:PandDcal} depicts the sets $\Dcal$ and $\Pcal$. Note, ISS observers are a subset of BE observers, using the definitions $\Dcal(\hat x_0) = \{ x : \norm{x-\hat x_0} \leq \delta\}$ and $\Pcal(t, \hat x) = \{ x : \norm{x - \hat x(t)} \leq M_\delta(t) \}$. BE observers are more general than ISS observers in the following ways: (A)~The sets $\Dcal$ and $\Pcal$ do not have to be norm-balls. For example, they could be zonotopes~\cite{alamo2005guaranteed}, intervals~\cite{jaulin2002nonlinear}, or sublevel sets of sum-of-squares polynomials~\cite{alessandri2020lyapunov}. (B)~The shape and size of $\Pcal$ is allowed to change over time.

\begin{figure}[tbp]
   \centering
   \includegraphics[width=0.85\linewidth]{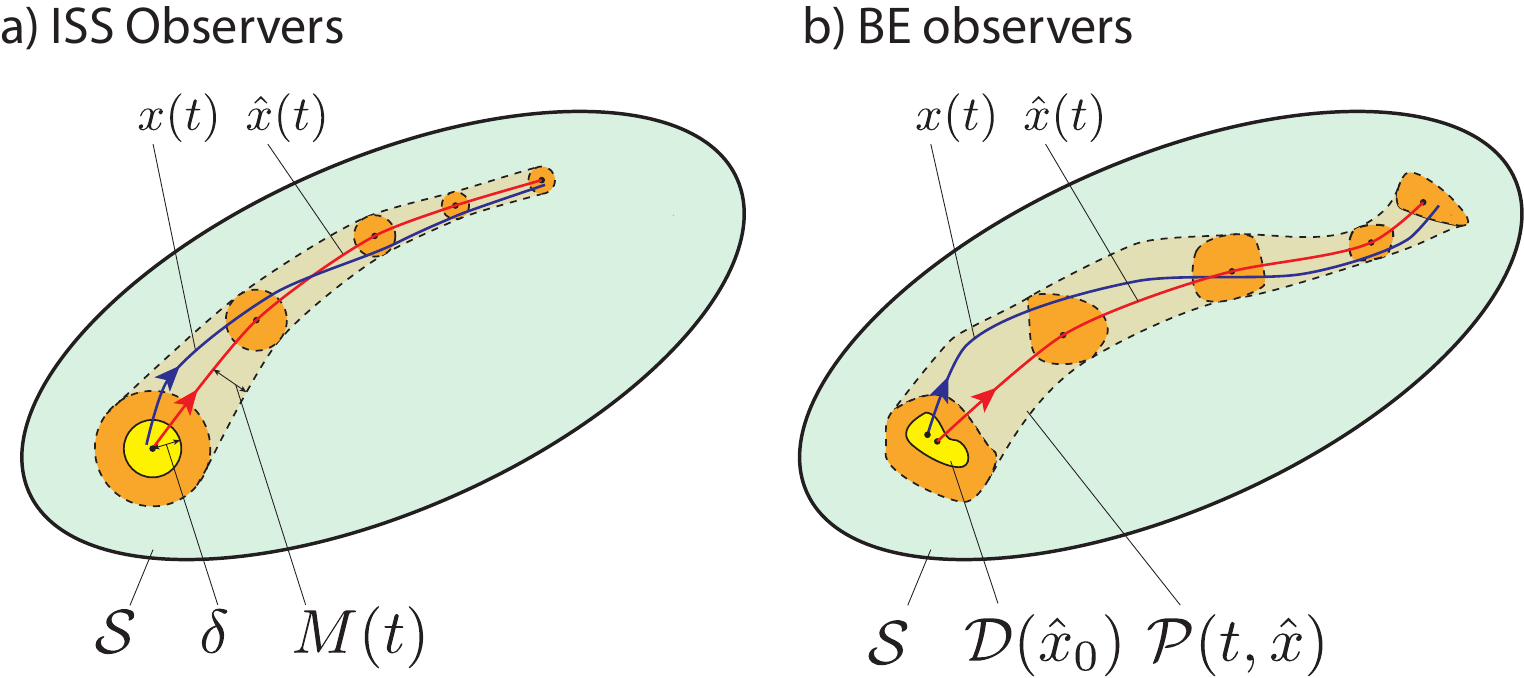} 
   \caption{Depiction of Input-to-State Stable observers and Bounded-Error observers. (a) In ISS observers, the estimation error is bounded by a norm-ball, and must be non-increasing in time. (b) In BE observers, the state estimate must be contained in a bounded set $\Pcal(t, \hat x)$.}
   \label{fig:PandDcal}
\end{figure}

The idea is to find a common, safe input for all $x \in \Pcal(t, \hat x)$:
\begin{theorem}
\label{thm:intersections}
For system~\eqref{eqn:sys}, suppose the observer~\eqref{eqn:observer} is a Bounded-Error observer. Suppose the safe set $\Scal$ is defined by a continuously differentiable function $h : \Xcal \to \R$, where $h$ is a Tunable Robust-CBF for the system. Suppose $\pi: \Rnonneg \times \Xcal \to \Ucal$ is an estimate-feedback controller, piecewise-continuous in the first argument and Lipschitz continuous in the second, s.t.
\eqn{
\label{eqn:intersections_cond}
\pi(t, \hat x) \in  \bigcap_{x \in \Pcal(t, \hat x)} K_{trcbf}(x),
}
where $K_{trcbf}$ is defined in~\eqref{eqn:Ktrcbf}.  Then the observer-controller renders the system safe from the initial-condition sets $x(0) \in \Xcal_0 = \Dcal(\hat x_0)$ and $\hat x_0 \in \hat \Xcal_0 = \{ \hat x: \Pcal(0, \hat x_0) \subset \Scal\}$.
\end{theorem}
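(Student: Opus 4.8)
The plan is to reduce the output-feedback claim to the (already understood) state-feedback robust-CBF argument behind $K_{trcbf}$. The point is that although the controller only sees $\hat x$, a Bounded-Error observer pins the \emph{true} state inside $\Pcal(t,\hat x)$ at every time, while the controller in \eqref{eqn:intersections_cond} is built to be safe simultaneously for every point of $\Pcal(t,\hat x)$; in particular it is safe for the (unknown) true state, so the closed loop behaves exactly as a state-feedback robust-CBF loop.

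Concretely I would proceed as follows. First, from $x(0)\in\Xcal_0=\Dcal(\hat x_0)$ and the defining property \eqref{eqn:DimpliesP} of a Bounded-Error observer, $x(t)\in\Pcal(t,\hat x)$ for all $t\in\Ical$. Second, from $\hat x_0\in\hat\Xcal_0$ we have $\Pcal(0,\hat x_0)\subset\Scal$, so $x(0)\in\Scal$, i.e. $h(x(0))\geq0$. Third, the key step: since $x(t)\in\Pcal(t,\hat x)$, hypothesis \eqref{eqn:intersections_cond} gives that the applied input $u=\pi(t,\hat x)$ lies in $K_{trcbf}(x(t))$ for every $t\in\Ical$, evaluated at the true state. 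Fourth, differentiating $h$ along the closed loop \eqref{eqn:closed_loop_sys} and using the definition \eqref{eqn:Ktrcbf} together with $\norm{d(t)}\leq\bar d$ and Cauchy-Schwarz,
\eqnN{
\dot h(x(t)) &= L_fh(x(t)) + L_gh(x(t))u + L_{g_d}h(x(t))d(t) \\
&\geq -\alpha(h(x(t))) + \big(\kappa(h(x(t)))-1\big)\norm{L_{g_d}h(x(t))}\bar d,
}
which, on the boundary $\{h=0\}$ (where $\kappa(0)=1$ and $\alpha(0)=0$), reduces to $\dot h\geq0$. Fifth, invoke the standard robust-CBF forward-invariance argument (as in \cite{jankovic2018robust,alan2021safe}) to conclude from $h(x(0))\geq0$ that $h(x(t))\geq0$, i.e. $x(t)\in\Scal$, for all $t\in\Ical$.

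I expect the last step to be the main obstacle. With the tunable gain $\kappa$, the clean inequality $\dot h\geq-\alpha(h)$ holds only asymptotically as $h\to0^+$ (for $\kappa\equiv1$ it holds on all of $\Scal$), so forward invariance must be argued either via a tangent-cone/Nagumo characterisation that uses only the boundary behaviour, or by absorbing the residual $-(1-\kappa(h))\norm{L_{g_d}h}\bar d$ into a modified class-$\mathcal{K}$ comparison function (legitimate on any compact subset of $\Scal$ since $L_{g_d}h$ is continuous). A related subtlety to dispatch is that $K_{trcbf}(x)$, hence the intersection in \eqref{eqn:intersections_cond}, is only meaningful where $\kappa(h(x))$ is defined, so the argument should be organised --- as is standard in CBF proofs, via a maximal-interval/continuity argument over the set of times for which $x(\tau)\in\Scal$ for all $\tau\leq t$ --- so that $\Pcal(t,\hat x)$ stays inside $\Scal$ along the trajectory, consistently with the safety conclusion being derived.
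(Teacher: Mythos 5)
Your proposal is correct and follows essentially the same route as the paper: use the Bounded-Error property to place the true state in $\Pcal(t,\hat x)$, conclude the applied input lies in $K_{trcbf}(x(t))$, show $\dot h \geq 0$ on $\partial\Scal$ (where $\kappa(0)=1$ and $\alpha(0)=0$), and invoke a Nagumo-type invariance result. The only difference is that you explicitly flag the gap between the boundary-only inequality and full forward invariance under a general $\kappa$, which the paper dispatches by citing \cite{blanchini1999set} without further comment --- your caution there is warranted but does not constitute a different argument.
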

\begin{proof}
The total derivative of $h$ for any $x \in \partial \Scal$ and $\pi(t, \hat x) \in K_{trcbf}(x)$ satisfies
\eqnN{
\dot h &= L_fh(x) + L_gh(x) \pi(t, \hat x) + L_{g_d}h(x) w(t)\\
& \geq L_fh(x) + L_gh(x) \pi(t, \hat x) - \kappa(0)\norm{ L_{g_d}h(x)} \bar w\\
& \geq -\alpha(0) = 0
}
since $h(x) = 0$, $\kappa(0) = 1$, and $x(t) \in \Pcal(t, \hat x)$. Therefore, at any $x \in \partial \Scal$, $\dot h \geq 0$, i.e., the system remains safe~\cite{blanchini1999set}.
\end{proof}


In general, designing a controller satisfying~\eqref{eqn:intersections_cond} can be difficult. We propose a method under the following assumptions:\looseness=-1
\begin{assumption}
\label{ass:a}
There exists a known function $a : \Rnonneg \times \Xcal \to \R$, piecewise continuous in the first argument and Lipschitz continuous in the second, such that for all $\hat x \in \Scal$,
\eqnN{
a(t, \hat x) \leq \inf_{x \in \Pcal(t, \hat x)} L_fh(x) - \kappa(h(x))\norm{L_{g_d}h(x)}\bar w + \alpha(h(x)).
}
\end{assumption}
By Assumption 1, $a(t, \hat x)$ lower-bounds the terms in $\dot h$ independent of $u$. These bounds can be obtained using Lipschitz constants. Similarly, we bound each term of $L_gh$:
\begin{assumption}
\label{ass:bi}
There exist known functions $b_i^-, b_i^+: \Rnonneg \times \Xcal \to \R$ for $i = \{ 1, ..., m\}$, piecewise continuous in the first argument and Lipschitz continuous in the second, such that\footnote{Recall, $[L_gh(x)]_i$  refers to the $i$-th element of $L_gh(x)$.}
\eqnN{
b_i^-(t, \hat x) \leq [L_gh(x)]_i \leq b_i^+(t, \hat x)
}
for all $t \geq 0$, all $x \in \Scal$ and all $\hat x \in \{ \hat x : x \in \Pcal(t, \hat x)\}$. Furthermore, suppose $\sign(b_i^-(t, \hat x)) = \sign(b_i^+(t, \hat x))$ at every $t, \hat x \in \Scal$, and that $h$ is of relative-degree 1, i.e., $L_gh(x) \neq 0$. 
\end{assumption}
Intuitively, by assuming $\sign(b_i^-(t, \hat x)) = \sign(b_i^+(t, \hat x))$ it is clear whether a positive or negative $u_i$ increases $\dot h(x, u)$.\footnote{Future work will attempt to relax this assumption. In our limited experience, the estimation error can be sufficiently small that the assumption holds. }
 
\begin{theorem}
\label{thm:intersections_thm}
Consider a system~\eqref{eqn:sys} with $\Ucal = \R^m$ and suppose the observer~\eqref{eqn:observer} is a Bounded-Error observer. Suppose $\Scal$ is the safe set defined by an TRCBF $h$ and Assumptions~\ref{ass:a}, \ref{ass:bi} are satisfied. Suppose $\pi_{des}: \Rnonneg \times \Xcal \to \Ucal$ is a desired controller, piecewise continuous wrt $t$ and Lipschitz continuous wrt $\hat x$. Then the estimate-feedback controller $\pi: \Rnonneg \times \Xcal \to \R^m$
\eqn{
\label{eqn:intersections_pi}
\pi(t, \hat x) &= \argmin{u \in \R^m} \  \norm{u - \pi_{des}(t, \hat x)}^2\\ 
\text{s.t.} \ &a(t, \hat x) + \sum_{i=1}^{m} \min \{b_i^-(t, \hat x) u_i, b_i^+(t, \hat x) u_i\} \geq 0 \notag
}
is piecewise continuous wrt $t$, Lipschitz continuous wrt $x$, and renders the system safe from the initial-condition sets $x_0 \in \Xcal_0 = \Dcal(\hat x_0)$ and $\hat x_0 \in \hat \Xcal_0 = \{ \hat x: \Pcal(0, \hat x_0) \subset \Scal\}$.
\end{theorem}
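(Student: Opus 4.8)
The plan is to show two things: (i) the constrained controller~\eqref{eqn:intersections_pi} satisfies the hypothesis~\eqref{eqn:intersections_cond} of Theorem~\ref{thm:intersections}, so that safety follows immediately; and (ii) the map $(t,\hat x)\mapsto\pi(t,\hat x)$ defined by the QP is piecewise continuous in $t$ and Lipschitz continuous in $\hat x$, so the closed-loop system is well-posed and the hypotheses of Theorem~\ref{thm:intersections} are genuinely met. I would state (i) first because it is the conceptual core, then handle the regularity in (ii).

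For part (i), fix $t$ and $\hat x\in\Scal$ and let $u=\pi(t,\hat x)$ be the QP solution, which is feasible by assumption on the structure (I will need to check feasibility is nonvacuous — see the obstacle below). I would show that for every $x\in\Pcal(t,\hat x)$ we have $u\in K_{trcbf}(x)$, i.e. $L_fh(x)+L_gh(x)u+\alpha(h(x))\geq\kappa(h(x))\norm{L_{g_d}h(x)}\bar w$. The key algebraic step: since $\sign(b_i^-(t,\hat x))=\sign(b_i^+(t,\hat x))$, for any fixed sign of $u_i$ the quantity $b_i^\pm(t,\hat x)u_i$ has a well-defined smaller value, and by Assumption~\ref{ass:bi} the true term $[L_gh(x)]_i u_i$ lies between $b_i^-(t,\hat x)u_i$ and $b_i^+(t,\hat x)u_i$; hence $[L_gh(x)]_i u_i \geq \min\{b_i^-(t,\hat x)u_i,\, b_i^+(t,\hat x)u_i\}$ for every $x$ with $x\in\Pcal(t,\hat x)$. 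Summing over $i$ and adding $a(t,\hat x)$, and then invoking Assumption~\ref{ass:a} to replace $a(t,\hat x)$ by the lower bound on $L_fh(x)-\kappa(h(x))\norm{L_{g_d}h(x)}\bar w+\alpha(h(x))$, the QP constraint $a(t,\hat x)+\sum_i\min\{b_i^- u_i,b_i^+ u_i\}\geq 0$ implies exactly the TRCBF inequality at $x$. Since this holds for all $x\in\Pcal(t,\hat x)$, we get $\pi(t,\hat x)\in\bigcap_{x\in\Pcal(t,\hat x)}K_{trcbf}(x)$, and Theorem~\ref{thm:intersections} delivers safety from the stated initial-condition sets.

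For part (ii), I would rewrite the QP in a standard form with a single affine-in-$u$ inequality constraint. The trick is to eliminate the $\min$: because $\sign(b_i^-)=\sign(b_i^+)$ is constant on $\Scal$, the active branch of $\min\{b_i^- u_i,b_i^+ u_i\}$ is determined solely by $\sign(u_i)$, so the feasible set is a finite union of polyhedra (indexed by sign patterns of $u$), each with a constraint of the form $a(t,\hat x)+\tilde b(t,\hat x)^\top u\geq 0$ where each component of $\tilde b$ is either $b_i^-$ or $b_i^+$. On each such region the problem is a strictly convex QP with affine constraint and a closed-form KKT solution; the solution is Lipschitz in the data $(a,\tilde b,\pi_{des})$ away from degeneracy, and these data are Lipschitz in $\hat x$ and piecewise continuous in $t$ by the assumptions on $a$, $b_i^\pm$, $\pi_{des}$. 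To patch the regions together I would appeal to a standard result on Lipschitz continuity of the solution of a parametric strictly convex QP with continuous data (e.g. the argument used for the standard CBF-QP in~\cite{ames2016control} / the Lipschitzness results cited therein), noting that the overall feasible-set map is continuous (Hausdorff) because the branch boundaries $u_i=0$ are themselves consistent across adjacent pieces.

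The main obstacle I anticipate is \emph{feasibility and non-degeneracy of the QP}: I must argue the constraint set is nonempty at every $(t,\hat x)$ with $\hat x\in\Scal$ — this should follow from the TRCBF property of $h$ together with relative degree one ($L_gh(x)\neq 0$, so some $b_i^\pm\neq 0$) and Assumption~\ref{ass:a}, but it requires care to show the bounds $a,b_i^\pm$ are tight enough that a common input exists, rather than just each individual $K_{trcbf}(x)$ being nonempty. The second delicate point is establishing Lipschitz continuity uniformly across the sign-pattern boundaries; I would lean on the fact that the constraint function $(t,\hat x,u)\mapsto a(t,\hat x)+\sum_i\min\{b_i^- u_i,b_i^+ u_i\}$ is jointly continuous and concave in $u$ (a min of affine functions), so the QP is a genuine convex program and the known Lipschitz-continuity machinery for CBF-QPs applies directly once feasibility is in hand.
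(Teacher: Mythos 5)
Your safety argument (part~(i)) is exactly the paper's: since $b_i^-\leq [L_gh(x)]_i\leq b_i^+$ implies $[L_gh(x)]_i u_i \geq \min\{b_i^-u_i,\,b_i^+u_i\}$ for every $x\in\Pcal(t,\hat x)$, the QP constraint together with Assumption~\ref{ass:a} implies~\eqref{eqn:intersections_cond}, and Theorem~\ref{thm:intersections} gives safety. Where you diverge is the feasibility/regularity analysis. The paper introduces auxiliary variables $k_i\leq\min\{b_i^-u_i,b_i^+u_i\}$ to rewrite~\eqref{eqn:intersections_pi} as a single QP in $(u,k)\in\R^{2m}$ with $2m+1$ affine constraints; the sign condition in Assumption~\ref{ass:bi} guarantees that at most one constraint of each pair can be active, so the active constraints are always linearly independent. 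This yields feasibility (fewer independent constraints than decision variables) and, via the regularity conditions of~\cite{hager1979lipschitz}, Lipschitz continuity of the solution in the data $(a,b_i^\pm,\pi_{des})$ in one step. Your alternative---resolving the $\min$ orthant-by-orthant and ``patching''---is workable in principle but the patching is precisely the step you leave vague: the feasible set is actually a single convex polyhedron (the intersection of the $2^m$ half-spaces $a+\tilde b^\top u\geq 0$ over all branch choices $\tilde b_i\in\{b_i^-,b_i^+\}$, since a sum of minima of affine functions is concave), and obtaining a uniform Lipschitz constant across orthant boundaries still requires a constraint-qualification argument; the paper's lifted formulation is exactly the device that supplies one with only $O(m)$ constraints. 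Finally, the feasibility obstacle you flag is dispatched more easily than you suggest: no ``tightness'' of $a$ or $b_i^\pm$ is needed, because $\Ucal=\R^m$ and Assumption~\ref{ass:bi} provides some $i$ with $b_i^-,b_i^+$ nonzero and of equal sign, so taking $u_i$ of that sign and sufficiently large makes $\sum_i\min\{b_i^-u_i,b_i^+u_i\}$ arbitrarily large, hence the constraint is satisfiable for any value of $a(t,\hat x)$.
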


\begin{proof}
\emph{First, we prove existence and uniqueness of solutions to the QP.} In standard form, the QP~\eqref{eqn:intersections_pi} is equivalent to
\eqn{
\label{eqn:intersections_equiv}
&\min_{u \in \R^m, k \in \R^m} \frac{1}{2} u^T u - \pi_{des}^T u\\
\text{s.t.}&\left[
    \begin{array}{@{} ccc|ccc@{} }b_1^- & ... & 0 & -1 & ...&0\\
b_1^+ & ...& 0  & -1 & ...&0 \\
\vdots&  \ddots & \vdots & \vdots & \ddots &\vdots \\
0 & ... & b_m^- & 0 & ...   & -1 \\
0 & ... & b_m^+  & 0 & ... &  -1\\
\hline
0 & ... & 0 &1  & ... &  1
    \end{array}
    \right]
    \left[
\begin{array}{@{} c@{} }
u_1 \\ \vdots \\u_m \\ \hline k_1  \\ \vdots\\ k_m
\end{array}
\right] \geq
\left[
\begin{array}{@{} c@{} } 0 \\0\\ \vdots\\ 0 \\ 0  \\ \hline -a
\end{array}
\right]
\notag
}
where the dependences on $(t, \hat x)$ were omitted for brevity. Here $k \in \R^m$ is an auxiliary variable encoding the constraint $k_i \leq \min \{ b_i^- u_i, b_i^+ u_i\}$ for all $i = \{ 1, ..., m\}$. This constraint matrix has size $(2m+1, 2m)$. However, since $\sign(b_i^-) = \sign(b_i^+)$ by Assumption~\ref{ass:bi}, only one of either the $(2i-1)$-th or $(2i)$-th constraints can be active.\footnote{Note, if $b_i^- = b_i^+ \neq 0$, then both constraints are equivalent, and thus still means a single constraints is active. Since $L_gh(x) \neq 0$  (Assumption~\ref{ass:bi}),  $b_i^- = b_i^+ \neq 0$ for atleast one of $i =1, ..., m$.}  Considering the sparsity pattern of active constraint matrix, these constraints must be linearly independent. Therefore, the proposed QP has $2m$ decision variables with at most $m+1$ linearly independent constraints, and thus a non-empty set of feasible solutions. Since the cost function is quadratic, there exists a unique minimizer.

\emph{Second, we prove Lipschitz continuity.} Since the active constraints matrix has linearly independent rows, the regularity conditions in~\cite{hager1979lipschitz} are met. Thus the solution $\pi(t, \hat x)$ is Lipschitz continuous wrt $\pi_{des}(t, \hat x), a(t, \hat x), b_i^-(t, \hat x)$ and $b_i^+(t, \hat x)$. Since these quantities are piecewise continuous wrt $t$ and Lipschitz continuous wrt $\hat x$, the same is true for $\pi(t, \hat x)$.

\emph{Finally, we prove safety.} 
Since (omitting $t, x, \hat x$),
\eqnN{
L_gh u = \sum_{i=1}^m [L_gh]_i u_i \geq \sum_{i=1}^m \min \{b_i^- u_i, b_i^+ u_i\},
}
satisfaction of the constraint in~\eqref{eqn:intersections_pi} implies satisfaction of \eqref{eqn:intersections_cond}. Therefore, by Theorem~\ref{thm:intersections}, the system is rendered safe.
\end{proof}

\section{Simulations and Experiments}

Code and videos are available here:{ \small \url{https://github.com/dev10110/robust-safe-observer-controllers} }

\subsubsection{Simulation: Double Integrator}

We simulate a double integrator system without disturbances, to demonstrate the importance of the observer-controller interconnection.  The system is (with $\Ucal = \R$)
\eqn{
\label{eqn:di}
\dot x_1  = x_2, \ \dot x_2 = u, \ y = x_1,}
and the safe set is defined as $\Scal = \{ x : x_1 \leq x_{max} \}$. We use the CBF $h(x) = -x_2 + \alpha_0 (x_{max} - x_1)$. A Luenberger-observer, $\dot{\hat x}  = A \hat x + B u + L(y-C \hat x)$, is used, where $L = 1/2 P^{-1} C^T$ and $P \in \pd^2$ is the solution the Lyapunov equation $PA + A^T P - C^TC = -2 \theta P$ for design parameter $\theta > 0$. This observer is ISS, since for any $\delta > 0$, \eqref{eqn:iss_bound_M} is satisfied with $M_\delta(t) = \sqrt{\lambda_{max}(P) / \lambda_{min}(P)} \delta e^{-\theta t}$. This observer is also a Bounded Error observer since for any $\delta > 0$, \eqref{eqn:DimpliesP} is satisfied with $\Dcal(\hat x_0) = \{ x : \norm{x_0 - \hat x_0} \leq \delta \}$ and $\Pcal(t, \hat x) = \{ x : (x-\hat x)^T P (x - \hat x) \leq \lambda_{max}(P) \delta^2 e^{-2\theta t} \}$. 

We compare the methods proposed in this paper to the CBF-QP of~\cite{ames2016control} (referred to as the Baseline-QP), using $\hat x$ in lieu of $x$. Plots of the resulting trajectory are depicted in Figure~\ref{fig:fig1}, demonstrating safety violation. The trajectory plots under the controllers based on Approaches 1 and 2 are shown in Figure~\ref{fig:motivation}, demonstrating that safety is maintained in both cases. In Approach 2, the function $L_fh(x)$ is affine in $x$ and $L_gh(x) = -1$ is independent of $x$, and therefore the function $a(t, \hat x)$ was determined using a box bound around $\Pcal(t, \hat x)$ and $b_i^-(t, \hat x) = b_i^+(t, \hat x) = -1$. Numerically, we have noticed that for some initial conditions and convergence rates, the controller of Approach 1 is less conservative than the controller of Approach 2, and in other cases the converse is true. Identifying conditions that determine whether Approach 1 or 2 is less conservative remains an open question.

\begin{figure}[tbp]
   \centering
   \includegraphics[width=0.85\linewidth]{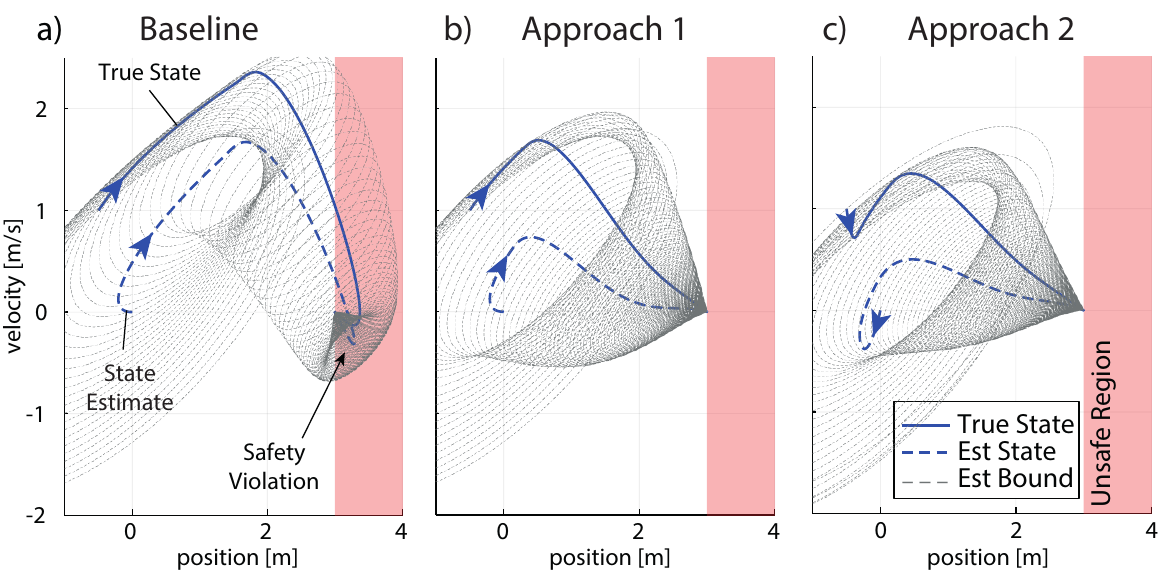} 
   \caption{Simulation results for the Double Integrator~\eqref{eqn:di}, using (a) the baseline CBF controller, (b) Approach 1 and (c) Approach 2. The same initial conditions and observer is used for each simulation. }
   \label{fig:motivation}
   \end{figure}

\subsubsection{Simulation: Planar Quadrotor}

Consider
\eqnN{
\bmat{\dot x_1 \\ \dot x_2 \\ \dot x_3 \\ \dot x_4 \\ \dot x_5 \\ \dot x_6} &= \bmat{x_4 \\ x_5 \\ x_6 \\ 0 \\ - g \\ 0}  + \bmat{ 0 & 0 \\ 0 & 0 \\ 0 & 0 \\ \sin x_3 /m & 0 \\ \cos x_3 / m & 0 \\ 0 & J^{-1} } \bmat{ u_1 \\ u_2} + \bmat{ 0 \\ 0 \\ 0 \\d_1(t) \\ d_2(t) \\ 0} \\
y &= \bmat{x_1, x_2, x_3}^T + \bmat{v_1(t), v_2(t), v_3(t)}^T
}
where $[x_1, x_2]^T$ are the position coordinates of the quadrotor with respect to an inertial coordinate frame, $x_3$ is the pitch angle,  $[x_4, x_5]^T$ are the linear velocities in the inertial frame, and $x_6$ is the rate of change of pitch. The quadrotor has mass~$m=1.0$~kg and moment of inertial~$J=0.25$~kg/m$^2$, and the acceleration due to the gravity is $g = 9.81$~m/s$^{2}$. The control inputs are thrust $u_1$ and torque $u_2$. The disturbances $d : \Rnonneg \to \R^2$ captures the effect of unmodeled aerodynamic forces on the system, bounded by $\norm{d} \leq 2$~m/s$^2$.  The measurement disturbance is $v: \Rnonneg \to \R^3$, bounded by 5~cm for position measurements, and $5^\circ$ for pitch measurements. 

The safety condition is to avoid collision with a circular obstacle at $[x_1^*,  x_2^*]^T$ of radius $r$, i.e., $\Scal = \{ x : (x_1-x_1^*)^2 + (x_2 - x_2^*)^2 - r^2 \geq 0\}$. The CBF proposed in~\cite{wu2016safety} is used. The desired control input is a LQR controller linearized about the hover state. 
The observer is a DEKF adapted from~\cite{reif_EKF}:\footnote{In~\cite{reif_EKF}, only the undisturbed case is demonstrated. The extension to include bounded disturbances can be derived using the same techniques as in the original paper. The additional terms due to the disturbances are bounded using~\cite[Eq. B4]{khalil2015nonlinear}.} Defining constant matrices $D_1 = g_d(x)$ and $D_2 = c_d(x)$, the observer is
\eqnN{
\label{eqn:ekf_observer}
\begin{cases}
\dot {\hat x} &= f(\hat x) + g(\hat x) u + P C^T R^{-1} (y - c(\hat x))\\
\dot P &= PA^T + AP - P C^T R^{-1} C P + Q + 2 \theta P\\
\dot V &= - 2 \theta V + 2\sqrt{V}\left(\norm{D_1^TP^{-1/2}}\bar d + \norm{(LD_2)^T  P^{-1/2}} \bar v\right)
\end{cases}
}
where $\theta \geq 0$ is a design parameter, $A = \frac{\partial }{\partial \hat x} (f(\hat x) + g(\hat x) u)$, $C = \frac{\partial c}{\partial x} (\hat x)$. In the standard form of EKFs~\cite[Sec 5.3]{lewis2017optimal}, the disturbances are assumed to be Weiner processes and  $Q, R$ represent the covariances of the $d(t)$ and $v(t)$. However in the Deterministic EKF, we assume $d(t), v(t)$ are bounded, and thus $Q\in\pd^n, R\in \pd^{n_y}$ can be freely chosen. Assuming there exist positive constants $p_1, p_2$ such that  $p_1 I \leq P(t) \leq p_2 I \ \forall t \in \Ical$, (see \cite[Sec 11.2]{khalil2015nonlinear}), this observer is a Bounded-Error observer, and satisfies~\eqref{eqn:DimpliesP} with $\Dcal(\hat x_0) = \Pcal(0, \hat x_0)$, and $\Pcal(t, \hat x) = \{ x : (x - \hat x)^T P(t)^{-1} (x - \hat x) \leq V(t) \}$.

The method in Approach 2 is used to synthesize the interconnected observer-controller. Specifically, the functions $a$, $b_i^-$, and $b_i^+$ were determined using Lipschitz bounds, and the QP~\eqref{eqn:intersections_pi} is used to determine the control input. 

Figure~\ref{fig:planar_quad} compares the trajectory of the planar quadrotor using the controller proposed in~\cite{wu2016safety} (baseline case) to the proposed controller of Approach 2. In the baseline case, since the state estimate is used in lieu of the true state, safety is violated.  By accounting for the state estimation uncertainty, the proposed controller avoids the obstacle. 

   \begin{figure}[tbp]
   \centering
   \includegraphics[width=0.8\linewidth]{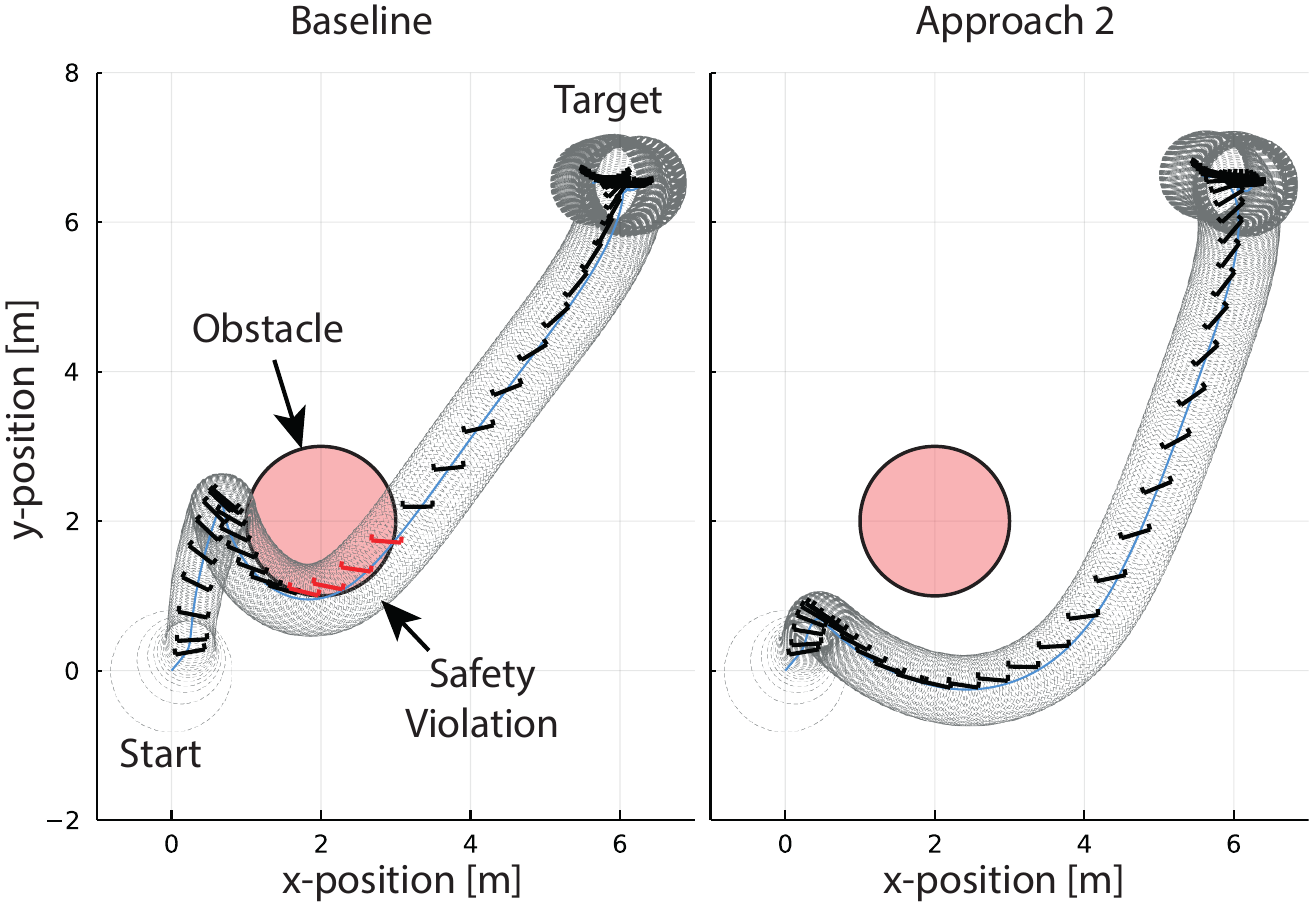} 
   \caption{Simulation Results for the Planar Quadrotor. The objective is to fly the quadrotor from the starting  state to the target position while avoiding the circular obstacle region. The blue lines indicate the path of the state estimate and grey lines the the projection of $\Pcal(t, \hat x)$ on the x-y plane. The icons show the quadrotor's true position every 0.2~s and is colored red while violating safety.  (a) uses the baseline CBF controller, and (b) uses Approach~2. }
   \label{fig:planar_quad}
      \end{figure}

\subsubsection{Experiments: 3D Quadrotor}

For our experiments, we use the Crazyflie~2.0 quadrotor, using the on-board IMU and barometer sensors and an external Vicon motion capture system. The objective was to fly in a figure of eight trajectory, but to not crash into a physical barrier placed at $x=0.5$~meters. State was estimated using an EKF~\cite{MuellerHamerUWB2015},  assuming the true state lies within the 99.8\% confidence interval of the EKF. To design the controller, first $\pi_{des}(t, \hat x)$ is computed using an LQR controller, which computes desired accelerations wrt to an inertial frame to track the desired trajectory. This command is filtered using a safety critical QP, either the baseline CBF-QP (Figure~\ref{fig:fig1}a) or the proposed QP using Approach 2~\eqref{eqn:intersections_pi} (Figure~\ref{fig:fig1}c). Finally, the internal algorithm of the Crazyflie (based on~\cite{mellinger2011minimum}) is used to map the output of the QP to motor PWM signals. The magnitude of the disturbances was estimated by collecting experimental data when the quadrotor was commanded to hover. The trajectories from the two flight controllers are compared in~Figure~\ref{fig:fig1}. In the baseline controller, the quadrotor slows down as it approaches the barrier, but still crashes into barrier. In the proposed controller, the quadrotor remains safe,~Figure~\ref{fig:fig1}e.


   \begin{figure}[tbp]
   \centering
   \includegraphics[width=0.8\linewidth]{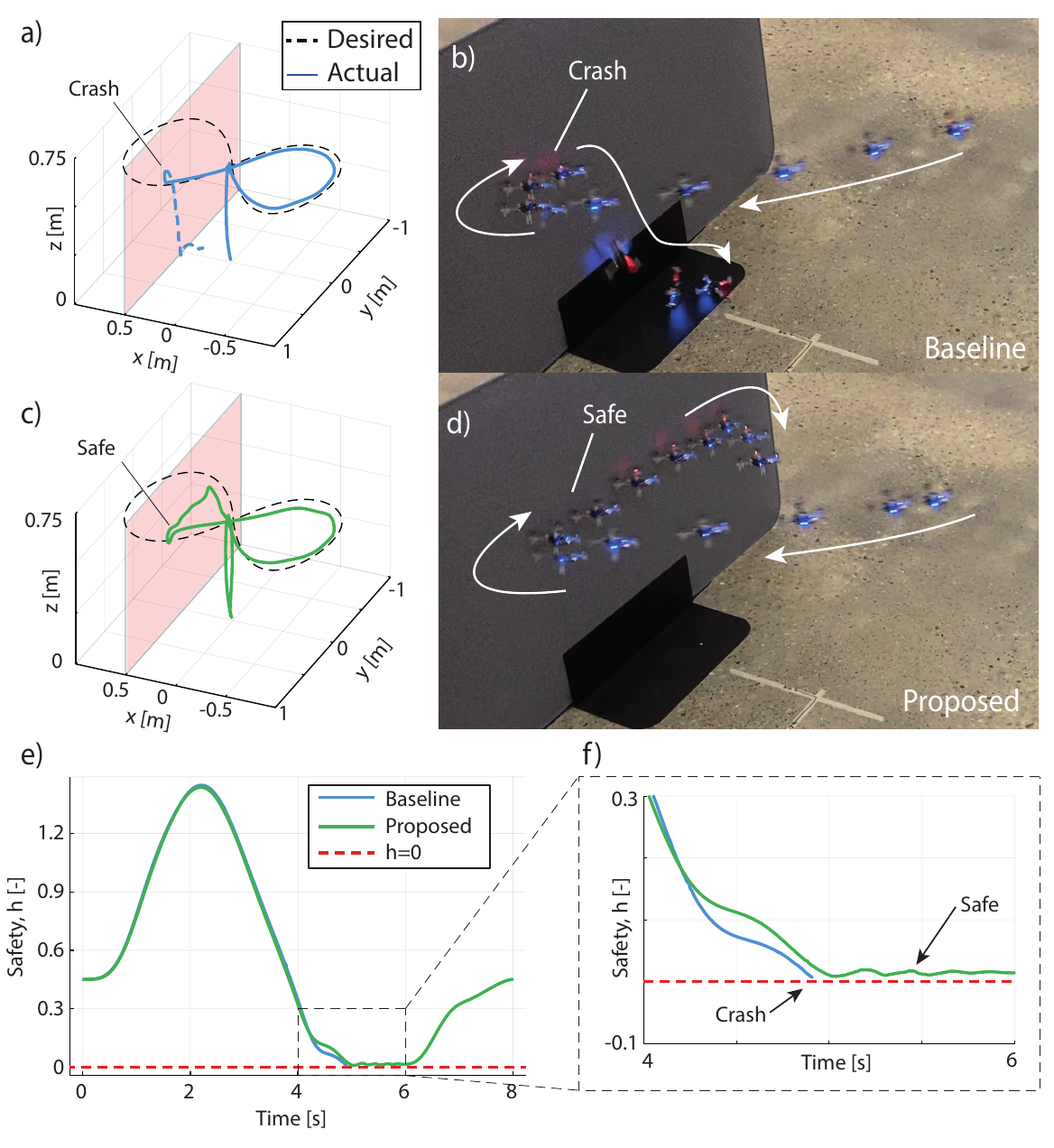} 
   \caption{Experimental results. The quadrotor is commanded to track a figure-of-eight trajectory, while avoiding the physical barrier at $x=0.5$~m. Ground truth trajectories are plotted in (a, c) for the baseline CBF and proposed controllers respectively. Snapshots from the experiment are show in~(b,~d). (e, f) Plots of the safety value, $h$ over time for both trajectories. }
   \label{fig:fig1}
   \vspace{-5mm}
\end{figure}

\section{Conclusion}
In this paper we have developed two methods to synthesize observer-controllers that are robust to bounded disturbances on system dynamics and measurements, and maintain safety in the presence of imperfect information. We have demonstrated the efficacy of these methods in simulation and experiments. Future work will investigate methods to learn the disturbance, such that the controller can adaptively tune itself to achieve better performance, and to extend the work to handle probabilistic guarantees of safety when the system is subject to stochastic disturbances instead of bounded disturbances.

{
\setstretch{0.9}
\bibliographystyle{ieeetr}
\bibliography{biblio}
}
\end{document}